\def\figref#1{Fig.\,\ref{#1}}%
\def\F{\mathcal{F}}
\def\e{\mathcal{E}}
\def\hb{\hat{\beta}_l}
\def\one{\mathcal{I}}
\def\E{\mathbb{E}}
\def\p{p}
\def\sinr{\mathtt{SINR}}
  \newtheorem{thm}{Theorem}
\title{Reinforcement learning techniques for Outer Loop Link Adaptation in 4G/5G systems}
\author{\hspace{-1cm}Saishankar K.P. \hspace{5cm} Sheetal Kalyani  \\ 
 Qualcomm India Private Limited, \hspace{2cm} Department of Electrical Engineering \\ \hspace{0.5cm} Hyderabad, India 500081  \hspace{3cm} Indian Institute of Technology, Madras,\\ \hspace{-1cm}kpsaishankar@gmail.com \hspace{4cm}
  Chennai, India 600036 \\ \hspace{7cm}
   skalyani@ee.iitm.ac.in
 }
\author{\hspace{-1cm}Saishankar K.P. \hspace{5cm} Sheetal Kalyani  \\ 
 Qualcomm India Private Limited, \hspace{2cm} Department of Electrical Engineering \\ \hspace{0.5cm} Hyderabad, India 500081  \hspace{2cm} Indian Institute of Technology, Madras,\\ \hspace{-1cm}kpsaishankar@gmail.com \hspace{4cm}
  Chennai, India 600036 \\ \hspace{7cm}
   skalyani@ee.iitm.ac.in
  }
\begin{document}
 
\maketitle

\begin{abstract}
Wireless systems perform rate adaptation to transmit at highest possible instantaneous rates. Rate adaptation has been increasingly granular over generations of wireless systems. The base-station uses SINR and packet decode feedback called acknowledgement/no acknowledgement (ACK/NACK) to perform rate adaptation. SINR is used for rate anchoring called inner look adaptation and ACK/NACK is used for fine offset adjustments called Outer Loop Link Adaptation (OLLA). We cast the OLLA as a reinforcement learning problem of the class of Multi-Armed Bandits (MAB) where the different offset values are the arms of the bandit. In OLLA, as the offset values increase, the probability of packet error also increase, and every user equipment (UE) has a desired Block Error Rate (BLER) to meet certain Quality of Service (QoS) requirements. For this MAB we propose a binary search based algorithm which achieves a Probably Approximately Correct (PAC) solution making use of bounds from large deviation theory and confidence bounds. In addition to this we also discuss how a Thompson sampling or UCB based method will not help us meet the target objectives. Finally, simulation results are provided on an LTE system simulator and thereby prove the efficacy of our proposed algorithm.
\end{abstract}
\section{Introduction}
Typically, applications for machine learning/reinforcement learning based algorithms involve processing of large quantities of data. As a result, these algorithms have not earned much traction in the field of mobile communications outside the areas of spectrum sensing and cognitive radio. In this paper, we would like to present a novel application for learning based algorithms in rate adaptation in 4G and even future 5G systems. As we move from 2G to 4G we notice that there is finer adaptation of transmission rates over a greater range (maximum transmission rate as well as the granularity of the rates used have increased across generations) to tide over the variations inherent to the wireless channel. This trend in rate adaptation is expected to continue for 5G as well with 5G networks expected to support a much higher capacity. Rate adaptation has two parts, inner loop adaptation where, the SINR measured by the user is used as an anchor to determine the transmission rate. This transmission rate is fed back and OLLA is used at the base-station to make appropriate corrections to this transmission rate.{\bf This OLLA can be formulated as exploring offsets around the transmission rate and then using the ACK/NACK information from each offset as reward to quantify the performance of the different offsets and make an optimal correction to the rate.} This is the objective we achieve in our work by using a classical reinforcement learning technique called the Multi-armed Bandit.

 Rate adaptation using various Transmission Rates (TR) has played a crucial role in exploiting the instantaneous channel capacity by communicating at a rate (bits per channel use) that is optimally suited to the current channel and interference conditions. The advancements in practical wireless systems has also been characterized by increase in the number of possible transmission rates supported by the system and this trend is likely to be unchanged. For example, EDGE supported 2 rates \cite{rappaport1996wireless} and LTE supports 28 rates \cite{sesia2009lte}.

 A rate metric which reflects the channel capacity is computed at the UE and then it is quantized and fed back to the BS. For example, LTE supports 4 bit quantization wherein the quantized feedback (fed back to the eNodeB called Channel Quality Indicator (CQI)) is a number between 0 and 15 \cite{sesia2009lte}. This 4 bit CQI value ($C^u$) is then mapped to a 5 bit Modulation and Coding Scheme MCS ($M^u$) at the eNodeB, which takes 28 possible states. 
We can denote the mapping as $M^u=f(C^u)$. The CQI to MCS mapping in LTE depends upon the CQI fed back and the quantity of resources allocated. The MCS in an LTE system denotes the TR. In this work, we are interested in adding an offset to our TR and use the resulting rate for transmission $M^u=f(C^u) + l $ and we would like to find out the value of $l$ that is ``optimal". { This correction factor is necessary for the following reasons:
\begin{enumerate}
\item The $\sinr$ estimation at the UE is done using pilot symbols which are limited in number and accurately estimating the rate metric may not be possible. 
\item The UE may sometimes report back a biased conservative/optimistic rate estimate leading to a very low or high Block Error Rate (BLER) \footnote{A conservative rate is one were a very low rate is transmitted and the probability of successful transmission is high and an optimistic rate means a high rate with low decoding success.}. Moreover, different users may have different target BLER criteria depending upon the required quality of service (QoS).
\item The CQI estimation algorithm used at the UE is not known to BS and an OLLA algorithm is necessary at the BS to correct for any discrepancies in the reported CQI.
\item The reported rate could become outdated by the time the eNodeB uses the information.
\end{enumerate}
It is important to note that all the above artefacts will be UE specific. Therefore, a fixed rate correction at the base-station will not suffice. Consequently, the BS needs to run an OLLA algorithm specific to each UE. } These reasons are discussed in previous papers such as \cite{kari,poor2013,dur2015,pull2015} in detail. The work in \cite{kari} is an $\sinr$ back-off technique based on the ACK/NACK history and can be deployed only at the UE because the continuous valued $\sinr$ is known only at the UE. Note here that, ACK stands for acknowledgement that a transmitted packet has been decoded successfully and NACK indicates a failed decoding. The work in \cite{poor2013} also can be implemented at the UE only because it involves changing the $\sinr$ thresholds for selecting the different rates. No matter what link adaptation is used at UE, the BS would also like to have a link adaptation algorithm because the UE algorithms are proprietary and the BS has no way of knowing the link adaptation algorithm at the UE if any. The BS will always have a quantized rate/ $\sinr$ metric and a knowledge of whether a packet was successfully received and this can be used to calculate the BLER as well.  

The work in \cite{dur2015} performs OLLA at the BS by converting the CQI to $\sinr$ and then performing a bias correction on the resultant $\sinr$. It is claimed that by averaging the initial offset across users the algorithm can converge to the true offset value within a short time. The drawback of this scheme is that averaging across different UEs' offset may be a problem because the UEs are proprietary i.e., the CQI computing and reports change depending upon the UE vendor and model. Hence different UEs may have different ways of converting $\sinr$ to CQI for reporting. Hence using an average offset value across users is equivalent to assuming that all UEs see similar reporting errors. Since different UEs will have very different CQI reporting algorithms as well as different $\sinr$ levels, such a scenario is unlikely. 

The work in \cite{pull2015} exclusively discusses the impact of outdated feedback on rate adaptation. It attempts to predict the CQI evolution over time based on past observations. It assumes that, though the feedback is outdated, the feedback value itself is accurate. It is a purely prediction based scheme and doesn not exploit the available ACK/NACK data. Though \cite{pull2015} is able to handle outdated feedback, it cannot work with erroneous or biased CQI values. On the other hand, we can handle erroneous CQI values and even outdated CQI by modelling is as the true CQI plus an error term. Therefore, the scheme proposed here can handle erroneous and outdated CQI since it exploits the available ACK/NACK information in its OLLA formulation efficiently. 

 In this work, we are interested in finding an offset to the reported rate so that a target BLER can be achieved. We formulate the problem of finding the offset that achieves the target BLER as a Multi-Armed Bandit (MAB) problem. The proposed technique treats each UE differently and a unique offset is computed for each UE. 

The offset typically depends upon two major criteria and they are:
\begin{enumerate}
\item There is a maximum target BLER and the actual BLER should be around this target BLER.
\item While maintaining the target BLER, maximum possible throughput has to be achieved.
\end{enumerate}
If reaching the target BLER was the only criterion, one could easily achieve it by using a large negative $l$ in the mapping $M^u=f(C^u)+ l $. However, this affects the throughput adversely. Therefore we search for a value of $l$ in the space $-L \leq l \leq L$ resulting in $2L+1$ possible values of $l$. Higher the value of $l$, greater the transmission rate but at the same time, the BLER may be higher than the target value. Therefore we would like to find the highest value of $l$ such that the achieved BLER is close to the target BLER. { We make an assumption that a common offset can be utilized for all the MCS values associated with a particular user. This simplifying assumption enables us in formulating the above problem as an MAB problem and later it will be shown through simulations that, despite this simplifying assumption our initial objectives are met.}  

{ Let us summarize our objectives, we have a set of offsets which are the possible actions. We have a target performance in mind and we would like to sample the various offsets so as to achieve the target BLER. Every time an offset is sampled we obtain an ACK/NACK which should drive our next sampling decision and eventually ensure that the target BLER is met. This is a classical reinforcement learning problem which shall continue to be relevant even in future wireless mobile communication systems. }
We endeavour to exploit the similarity of the problem at hand with the classical MAB problem studied in references \cite{schlag1998imitate,whittle1980multi,vakili2013}. We believe that we are the first to come up with this approach for the OLLA problem. An MAB problem is a game where, there are multiple play options available and they are treated as different arms and each option (arm) has an unknown expected reward and one explores to find the the best strategy so play/pull the different arms for maximizing the reward over time or for finding an arm that matches the objectives of the player. MAB and its properties have been extensively studied and used successfully for other applications in communications as in \cite{yang2015,qian2012,bag2015,yi2014,keq2010,oks2015}. In \cite{yang2015} this approach has been used to dynamically select one channel among multiple available channels for transmitting data  such that throughput is maximized. In \cite{qian2012}, an MAB formulation is used to develop a frequency hopping strategy to prevent jamming attacks. Similarly, variants of the MAB such as restless MAB and decentralized MAB have been used for cognitive radio and cognitive compressive sensing applications in \cite{bag2015,yi2014,keq2010,oks2015}.  In the problem considered here, similar to \cite{yang2015,qian2012} at any given point only one arm can be pulled and this corresponds to using a given offset value for a transmission i.e., we can use a single transmission rate which is a function of the MCS and the offset. We would like to explore the BLER probability of using different offsets by exploring the ACK/NACK behaviour for each offset and eventually select that arm which is closest to the desired BLER. 

 The above MAB based papers suggest a strategy to sequentially pull the arms in order to maximize the reward obtained over time. This is usually done by assigning weights to each arms and playing each arm depending on the weight assigned to it. These weights are assigned based on the rewards obtained from playing each arm and this strategy is called indexing strategy. Alternatively, the work in \cite{even2002pac} provides an approach called the $(\epsilon,\delta)$ approach where there is an exploration phase and the objective of the exploration phase is to exploratively play the given set of arms so as to finally obtain an arm that is $\epsilon$ close to the optimal arm with probability $1-\delta$. In typical reward maximization problems, the strategy employed by the likes of \cite{yang2015,qian2012} works well. In the current problem our objective is not to simply maximize a reward but to achieve a target BLER. This is a scenario where one knows the exact value of the optimal reward and the objective is to find an arm close to that value. The indexing strategy is used when we would like to maximize a reward and the maximum possible reward is unknown. {For this purpose we could artificially have a cost associated with each arm of the form $f(|\beta_i-\beta|)$ where $\beta_i$ is the success probability of arm $i$ and $\beta$ is the desired success probability and $f(.)$ is a convex function. In this formulation however, the choice of $f(.)$ is open ended and there is no intuitive choice for the convex function based on the rate optimization problem at hand. Moreover the target probability $\beta$ need not be achieved on a per UE basis, it is a system wide metric. Thus even if individual UEs are slightly away from the target BLER as long as the system wide BLER is maintained, we can achieve the desired network capacity.  Since we are interested in a) achieving a known target BLER as opposed to a scheme that maximizes reward and b) ensuring a system wide target BLER rather than per user control, we decided to follow the $(\epsilon,\delta)$ approach. This helps us is proposing an algorithm tailored to the MAB based OLLA problem. Apart from this, we also discuss Thompson sampling and Upper Confidence Bound based algorithms and briefly explain why they do not help us reach the target with the same intuitive elegance of the PAC approach developed in this paper.}

Specific to our problem, there are $2L+1$ arms corresponding to different back-off values. We would like to achieve a target BLER $\alpha$ while maximizing the value of $l$ used in the mapping $M^u=f(C^u)+ l$. We would like to maximize the value of $l$, because higher the value of $l$, higher the transmission rate. Let us denote by $\alpha_l$, the BLER when the $l$-th arm is used. It is apparent that the value of $\alpha_l$ increases as $l$ increases. Now a question may arise that rather than maximizing $l$ such that $(\alpha_l \leq \alpha)$, one could use the $l$ that maximizes the expected good-put $G=\alpha_l r_{u,l}$ where $r_{u,l}$, is the rate $M^u=f(C^u)+ l$ under the condition that $(\alpha_l \leq \alpha)$.This might be a good approach for a single-shot transmission. However, when we deal with systems where re-transmission is allowed in the case of an erroneous transmission, we are only required to transmit at the highest possible rate which meets a target BLER criterion. It is to be noted that as $l$ increases, a higher TR and consequently higher rate is transmitted, and this results in a reduced probability of successful decoding. Thus, we have $\alpha_{i+1} \geq \alpha_i$ consequently making the arms of the bandit dependent on each other. From previous discussions, it is apparent that our MAB problem has dependent arms similar to \cite{jia2011unimodal,pandey2007multi} and our objective is to solve for achieving the target BLER.

 If the rewards from one arm can give information about other arms, such an MAB is called dependent MAB.  In \cite{jia2011unimodal,pandey2007multi} it is clearly mentioned that utilizing the dependent arms will save exploration cost. We identify and exploit the unique structure of the dependence present in the OLLA problem to propose a novel exploration algorithm that reduces the exploration time. We would like to find the value of $l$ such that $|\alpha_l-\alpha|$ is minimum where $\alpha$ is the target BLER. To achieve this we would ideally try pulling the different arms of the bandit, till a particular arm is found to achieve the target BLER. Since we do have a well defined ordering in the set of arms i.e., $\alpha_{i+1} \geq \alpha_i$ , we develop a unique mechanism for pulling the arms and theoretically prove it to be $\epsilon$ close to the optimal solution with probability $(1-\delta)$. We use tail-bounds for binomial distributions in order to propose the mechanism by which we pull the arms.

{ Summarizing our main contributions:
\begin{itemize}
\item We are the first to cast the OLLA problem at the BS into an MAB frame work.
\item We also identify and argue that the PAC solution to the MAB problem would be intuitive to formulate and the nature of the problem considered is such that the PAC solutions is more ideally suited to achieving our twin objectives of a) maintaining high throughput and b) keeping the BLER within desired levels.
\item We exploit the dependent arm structure unique to the OLLA problem to achieve significant reduction of exploration.
\item Furthermore since we have a target reward, we are able to apply certain bounds from large deviations theory to estimate how close a given arm is, to the desired optimal arm. These bounds are novel from a general MAB context as well.
\end{itemize}}
\subsection{Brief Outline}
We start by giving a system model and then introducing the details of casting the OLLA problem as an MAB problem. Then we explain how the ordered arms of the bandit can be exploited for a reduction in exploration complexity. Then we provide some more improvements to the basic algorithm proposed by us and finally demonstrate through simulations, the efficacy of the proposed technique. We compare the proposed technique with a method in \cite{ariyanthe} which observes the occurrence of error clusters and reduces the transmission rate when there is an error cluster. In  \cite{ariyanthe} clusters of block errors are used to add a correction factor to the TR and this scheme is applied on an LTE network at the eNodeB. To this we also add a step of increasing the transmission rate when a very large error free cluster occurs. We finally provide results which show that the scheme proposed by us is able to control the BLER at the desired level without compromising on the transmission rate, thus increasing the overall throughput.

\section {System Model}\label{symmo}
We consider a mobile communication system which supports multiple transmission rates (TR) denoted by indices $\mathbb{M}=\{M_0,M_1 \hdots M_m\}$. At any given instant, the BS has to choose the most suitable rate among $\mathbb{M}=\{M_0,M_1 \hdots M_m\}$ for each UE being scheduled for downlink data reception. To aid the BS in deciding a suitable TR, the UE feeds back a metric called the CQI which is a function of the channel conditions seen by the UE. The system under consideration also supports multiple transmissions in case the transmitted packet is not decoded successfully the first time. Such a system needs two types of feedback by the UE -a) CQI feedback : The CQI which reflects the current channel conditions b)ACK/NACK feedback : If the transmission has been received successfully the UE sends a packet acknowledgement (ACK) to the eNodeB else it sends a negative acknowledgement (NACK).

The CQI feedback comes from a set $\mathbb{C}=\{C_0,C_1 \hdots C_c\}$. The CQI is used at the BS to compute a TR $\in \mathbb{M}$ and this process is called link adaptation.This link adaptation is denoted as follows:
\begin{align}
M^u=f(C^u).
\end{align}
 Then the ACK/NACK feedback can be used to fine tune the link adaptation and is called the Outer Loop Link Adaptation. If the ACK feedback history is denoted by $\{\one_{ACK_i}\} \forall i \in (1,2, \hdots N)$ the OLLA is denoted by
\begin{align}
M^u=f(C^u)+l(\one_{ACK_i}),\label{eq:offg}
\end{align}
where $l(\one_{ACK_i})$ is an offset added to transmission rate index and is a function of the previous ACK/NACK feedback history.

 The ACK/NACK can used to calculate BLER by empirical averaging. 
\begin{align}
\hat{\alpha}_l=\sum_{i=1}^N \frac{\one_{NACK_i}}{N}, \\
\hat{\beta}_l=\sum_{i=1}^N \frac{\one_{ACK_i}}{N}, \label{eq:alpestin}
\end{align}
The BS requests CQI feedback from each user $C^u$ and maps on to an TR $M^u$. As discussed in the introduction, this TR may be too optimistic or conservative and we would like to correct this by using the mapping:
\begin{align}
M^u=f(C^u)+l,\label{eq:off}
\end{align}
where $-L\leq l \leq L$. In this work, we would like to achieve a BLER of $\alpha=(1-\beta)$ while maximizing the throughput. 
From these estimated packet loss probabilities, we are interested in finding the value of $l$ for which $\beta_l$ is close to a target $\beta$. We assume that there is at least one value of $l \in {-L,..,L}$ for which $\beta-\epsilon \leq \beta_l \leq \beta+\epsilon$.  In the next section we set this problem as an MAB and propose solving mechanisms.

\section {Outer Loop Link Adaptation as an MAB and an Approximate Solution}
At each feedback instant the BS receives a CQI feedback $C^u$ from user $u$. From this the BS can perform one of the following actions:
\begin{itemize}
\item $M^u=f(C^u)-L$,
\vdots
\item $M^u=f(C^u)-1$,
\item $M^u=f(C^u)$,
\item $M^u=f(C^u)+1$,
\vdots
\item $M^u=f(C^u)+L$
\end{itemize}
and use the resulting $M^u$ for transmission. These actions are the different arms and are indexed from $-L$ to $L$. As we increase $L$ we are searching over a greater range. We are interested in finding the value of $l$ such that the probability of successful transmission is $\beta=1-\alpha$. We assume that the $l$-th arm has this probability to be $\beta_l=1-\alpha_l$, and 
\begin{align}
\beta_i \geq \beta_{i+1} \:\:\:\: \forall i. \label{eq:struct}
\end{align}
{ Here we have made $\beta_l$ independent of $M^u$ and $C^u$ while in reality there maybe some dependence. This assumption is key in not only formulating the solution to this problem but also formulating the problem as a simple MAB with $2L+1$ offsets. While this assumption simplifies the problem to a tractable form, our results show that we still reach our target BLER without compromising on throughput and are able to do so with excellent control. In our simulation model we do not force the $\beta_l$ to be independent of $M^u$ and $C^u$. }

One would like to find the arm $l_{opt}$,
\begin{align}
l_{opt}= \max \:\: l \text{ subject to } \beta_l \geq \beta,
\end{align}
in order to obtain the maximum throughput and still satisfy the BLER constraint.
If $\beta_l$s are known this is a deterministic optimization problem. However the $\beta_l$s are success probabilities associated with the different offsets, which are unknown and must be estimated. This process of estimation is performed by exploring the different arms and using the ACK/NACK to empirically determine the success probabilities $\beta_l$s. As we pull an arm more number of times, the confidence of the estimate $\hat{\beta_l}$ increases.  These estimates would converge to the true probabilities only asymptotically. Therefore, we provide techniques for approximate solutions which are $\epsilon$ close to the optimal solution with probability $1-\delta$. 

A general MAB problem is one where, there are $L$ arms and each of this arm has a reward distribution $f_l(r)$ where $r$ is the reward obtained. The work in \cite{even2002pac} considers an MAB where the reward is binary with different arms having different reward probabilities. To reiterate, in a typical MAB problem when a reward is obtained, the quantum of reward is not dependent on the arm however the probability with which the reward is obtained changes across arms. In our problem though, as the offset increases the probability of reward decreases while the quantum of reward increases. Therefore our MAB is a dependent arm MAB as studied in \cite{pandey2007multi}. Furthermore, we have a target BLER/ success rate to attain rather than mere maximization of the reward. Since our target is to achieve a known BLER, we use the $(\epsilon,\delta)$ approach of selecting the offset that is at least $\epsilon$ close to the known target probability with probability $(1-\delta)$. We start with a general algorithm that achieves the $(\epsilon,\delta)$ solution. 

 The PAC solution from \cite{even2002pac} is given as follows:
\begin{algorithm}[h]
 \caption{PAC Algorithm by Median Elimination} \label{pac}
\begin{algorithmic}[1]
\State The set of $2L+1$ arms, $\epsilon>0,\delta>0$, $\delta_1=\frac{\delta}{2}$, $\epsilon_1=\frac{\epsilon}{4}$
\State Sample all remaining arms sequentially , until each arm is sampled $\frac{4}{\epsilon_1^2}log(\frac{3}{\delta_1})$ times.
\State Let $\hat{\beta_l}$ be the reward.
\State Compute median $\beta_l$ and eliminate all arms with $\beta_l$ greater than median. 
\State Repeat from Step 2 with $\delta_1=\frac{\delta_1}{2}$ and $\epsilon_1=\frac{3\epsilon_1}{4}$ till only one arm is remaining.

\end{algorithmic}
\end{algorithm}
 The algorithm given in Algorithm \ref{pac} finds the PAC arm for a binary reward MAB problem. If one were to follow the algorithm given in Algorithm \ref{pac}, the arm sample complexity is shown to be $O\left(\frac{2L+1}{\epsilon^2} log\left(\frac{1}{\delta}\right) \right)$ . This strategy would get us an arm that is $\epsilon$ away from the desired $\alpha$ with probability $(1-\delta)$. { While this algorithm does get us to the PAC solution, it does not utilize the obvious structure in the MAB as shown in \eqref{eq:struct}. The Algorithm \ref{pac} is formulated for MABs where the arms are independent of one another.  However we know that $\beta_i \geq \beta_{i+1} \forall i$ making the arms dependent i.e., the knowledge about a particular arm also gives us some knowledge about the other arms. In the next section we propose an algorithm which is tailored to exploit this specific dependency and reduce arm sample complexity.}
 \section{Proposed PAC solution with Reduced Sampling} \label{pacst}
We now use the structure in the MAB problem considered by us and provide an MAB solution exploiting the structure of the problem considered by us.
Since from \eqref{eq:struct} we know that, $\beta_i \geq \beta_{i+1} \:\: \forall i$ it can be seen that 
\begin{align}
\beta_i > \beta, \implies \beta_j > \beta \:\:\:\:  -L<j<i .
\end{align}
Similarly,
\begin{align}
\beta_i < \beta, \implies \beta_j < \beta \:\:\:\:  i<j<L .
\end{align}
Thus, by sampling one arm and finding out its probability of successful decoding and comparing it with the desired probability we plan to eliminate an entire set of arms. If $\beta_i < \beta$ one can eliminate all the arms greater than $i$. Similarly, if $\beta_i > \beta$ one can eliminate all the arms lesser than $i$. This follows from the natural ordering of arms in our problem. If backing off by $i$ makes the resulting TR conservative, backing off by more than $i$ will result only in a more conservative TR and vice-versa for the optimistic TR. 

We will first describe the Algorithm \ref{pacmed} for exploration and the derive the parameters for that algorithm under the condition that, the algorithm achieves the PAC($\epsilon,\delta$) solution. We shall call this the PAC Binary Search algorithm (PBS).
\begin{algorithm}[h]
 \caption{PAC Binary Search Algorithm} \label{pacmed}
\begin{algorithmic}[1]
\State The set of $2L+1$ arms indexed from $-L$ to $L$, $l_{index}=0$ $l_{high}=L,l_{low}=-L$
\State Sample $l_{index}$, $N$ times and estimate $\hat{\beta}_{l_{index}}$.
\State If $\hat{\beta}_{l_{index}}>\beta$ eliminate all arms less than  $l_{index}$, $l_{low}=l_{index}+1$.
\State If $\hat{\beta}_{l_{index}}<\beta$ eliminate all arms greater than  $l_{index}$, $l_{high}=l_{index}-1$.
\State Now re-compute $l_{index}$ to be the median of the remaining arms from $\{l_{low},l_{low}+1, \hdots l_{high}\}$ and repeat from 2 if $l_{high}>l_{low}$.
\end{algorithmic}
\end{algorithm}
In this algorithm, one needs to estimate the arm closest to our desired BLER, since we have an MAB with ordered reward probabilities, we can the binary search technique for the same. The general principle of a binary search is as follows:
In a sorted list to find a particular value, we first compare the value with the median and if the value is greater than the median we can discard all those values less than or equal to the median and implement this technique again. This technique will help us locate the desired value with just $log_2(n)$ comparisons in a list of $n$ elements when the list is sorted as against $n$comparisons in an unsorted list. In our problem, we have a set of arms whose BLER is ordered much like the sorted list and hence the same principle applies. In this situation, all we must do is to pull $k$ arms such that:
\begin{align}
k=log_2(2L+1).
\end{align}
Next we have to find out a number $N$ which is the number of times we have to pull the arms in order to ensure that the PAC criteria are satisfied. We now state and prove a theorem which we use for computing $N$. \\ 

\begin{thm}
 In order to select an arm $l$ whose $\beta_l$ is $\epsilon$ close to the optimal arm $\beta$ with probability $(1-\delta)$, the median arms from the PBS algorithm must be sampled $N$ times such that $N=max\left(\frac{ln(\frac{log_2(2L+1)}{\delta})}{\beta ln\left(\frac{\beta}{\beta-\epsilon}\right)+(1-\beta) ln\left(\frac{1-\beta}{1-\beta+\epsilon}\right)},\frac{ln(\frac{log_2(2L+1)}{\delta})}{\beta ln\left(\frac{\beta}{\beta+\epsilon}\right)+(1-\beta) ln\left(\frac{1-\beta}{1-\beta-\epsilon}\right)}\right)$. 
\end{thm}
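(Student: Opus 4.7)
The plan is to combine a Chernoff bound for Bernoulli tails, expressed in the Kullback--Leibler form, with a union bound across the stages of the binary search. Since the PBS algorithm halves the candidate set at every step, it performs at most $\log_2(2L+1)$ comparisons before producing a single surviving arm. It therefore suffices to make every single-stage comparison correctly with probability at least $1-\delta/\log_2(2L+1)$; a union bound then delivers the $(1-\delta)$ PAC guarantee.

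At a generic stage the algorithm samples the median arm $l_{index}$, whose unknown success probability is $\beta_{l_{index}}$, and compares the empirical fraction $\hat{\beta}_{l_{index}}$ with the target $\beta$. Exploiting the monotonicity $\beta_i \geq \beta_{i+1}$ of \eqref{eq:struct}, an \emph{erroneous} elimination at this stage is possible only in two regimes: (i) $\beta_{l_{index}} \geq \beta+\epsilon$ but $\hat{\beta}_{l_{index}} < \beta$ (the true arm is too conservative, yet the estimate drags it below target), and (ii) $\beta_{l_{index}} \leq \beta-\epsilon$ but $\hat{\beta}_{l_{index}} > \beta$. If $|\beta_{l_{index}} - \beta| \leq \epsilon$ the median arm itself is already PAC-valid, so no stage error can occur.

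For each of the two bad regimes I would invoke the standard binomial large-deviation inequality $P(\hat{\beta}_N \leq a \mid p) \leq \exp(-N \, d(a,p))$ for $a \leq p$, where $d(a,p) = a\ln(a/p)+(1-a)\ln((1-a)/(1-p))$ is the Bernoulli KL divergence (and the analogous right-tail bound in regime (ii)). Since $d(\beta,p)$ is monotone increasing in the gap $|p-\beta|$, the loosest (worst case) value occurs when $p$ lies exactly at the boundary, namely $\beta_{l_{index}} = \beta+\epsilon$ in regime (i) and $\beta_{l_{index}} = \beta-\epsilon$ in regime (ii). Substituting these extremal values into $d(\beta,\cdot)$ yields precisely the two denominators appearing in the claimed $N$.

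To close the proof I would set each of the two worst-case tail probabilities to at most $\delta/\log_2(2L+1)$ and solve for $N$, producing the two candidate expressions whose maximum controls both error modes simultaneously. The piece that needs most care is the worst-case reduction inside the KL divergence: one must formally justify that $d(\beta, p)$ is monotone in $|p-\beta|$ on each side of $\beta$, so that the boundary values $p = \beta \pm \epsilon$ indeed give the weakest concentration among all arms that could trigger an error at a given stage. A final union bound over the at most $\log_2(2L+1)$ stages then completes the argument.
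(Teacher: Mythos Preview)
Your proposal is correct and follows essentially the same route as the paper: split the failure budget $\delta$ equally across the $\log_2(2L+1)$ binary-search stages, identify the two erroneous-elimination regimes, apply the KL-form Chernoff bound for binomial tails, reduce to the boundary values $\beta_{l_{index}}=\beta\pm\epsilon$ by monotonicity, and solve for $N$. Your explicit remark that the monotonicity of $d(\beta,p)$ in $|p-\beta|$ must be checked is a slight sharpening; the paper argues this step more informally via the monotonicity of the binomial tail in the underlying parameter.
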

\begin{proof}
In the PBS algorithm given as Algorithm \ref{pacmed}, we sample one arm and reject all the arms that are either greater than or lesser than that arm. If the $\epsilon$ optimal arm is in the set of rejected arms, the algorithm would never achieve $\epsilon$ optimality and we would like to limit this probability to $\delta$. Since we pull $k=log_2(2L+1)$ arms, we have to avoid eliminating the $\epsilon$ optimal arms at each step with a probability $1-\delta_1$ such that the overall probability of converging to the $\epsilon$ optimal arm is $(1-\delta)$. 

If at a given step the probability of eliminating the desired arms is $\delta_1$, the probability of eliminating the best arm after $k$ steps is $k\delta_1$ because the elimination events at different steps are independent of each other. Therefore, for ensuring the $(1-\delta)$ compliance, the probability of eliminating the desired arms at each step should be upper bounded by:
\begin{align}
\delta_1=\frac{\delta}{log_2(2L+1)} \label{eq:delc} .
\end{align}

Now let us consider the two possible events where the $\epsilon$ best arms are eliminated after sampling the arm $l$.
\begin{enumerate}[{Case} 1 :]
\item $\hat{\beta}_l>\beta$ while the true ${\beta}_l < \beta - \epsilon$. When $\hat{\beta}_l>\beta$ happens, all the arms below $l$ are discarded and if the ${\beta}_l$ is less than $\beta$ by more than $\epsilon$, it means that the $\epsilon$ optimal arm is less than $l$ but it has been eliminated.
\item $\hat{\beta}_l<\beta$ while the true ${\beta}_l > \beta + \epsilon$. Similarly, when  $\hat{\beta}_l<\beta$ happens and the ${\beta}_l$ is greater than $\beta$ by more than $\epsilon$, the optimal arm which is actually greater than $l$ gets eliminated.
\end{enumerate}

We would like to limit the probability of both the above events to $\delta_1$ and hence would like to choose a value of $N$ that ensures the same.
If $\beta_l<\beta$ as in the case 1) mentioned above we would like to ensure that:
\begin{align}
P(\hat{\beta}_l>\beta) \leq \delta_1 .
\end{align}
The value $\hat{\beta}_l$ is estimated after pulling the arm $N$ times as follows:
\begin{align}
\hat{\beta}_l=\sum_{i=1}^N \frac{\one_{ACK_i}}{N}, \label{eq:alpest}
\end{align}
where $ACK_i$ is the event that the $i$-th pull of the $l$-th arm resulted in a successful transmission. It can be seen that pulling the arm $N$ times, with the probability of success in each pull being $\beta_l$ is a binomial experiment and $\hat{\beta}_l$ is a random variable generated by the binomial experiment. The binomial random variable in the experiment is defined as :
\begin{align}
S_l=\sum_{i=1}^N \one_{ACK_i}, \label{eq:bir}.
\end{align}
The $P(\hat{\beta}_l>\beta)$ is given by:
\begin{align}
P(\hat{\beta}_l>\beta)&=P(S_l>N\beta)\\
&=P(\frac{S_l}{N}>\beta), \label{eq:bo1}
\end{align}

We first provide two tail bounds based on the Chernoff bounds and Large deviations theory \cite{cover2012elements,herbrich1999exact}. When $S_l$ is a binomial variable with parameters $N$ (number of trials) and $\beta_l$ probability of success, then :
 \ifCLASSOPTIONtwocolumn
 \begin{align}
 & P\left(\frac{S_l}{N} \geq \beta \right) \leq \\ \nonumber & exp\left(-N\left(\beta \:\:\: ln\left(\frac{\beta}{\beta_l}\right) +(1-\beta) ln\left(\frac{1-\beta}{1-\beta_l}\right)\right)\right) \nonumber \\ & \forall \:\:\: \beta > \beta_l \label{eq:rt}\\
 & P\left(\frac{S_l}{N} \leq  \beta \right) \leq \\ \nonumber & exp\left(-N\left(\beta \:\:\: ln\left(\frac{\beta}{\beta_l}\right)+ (1-\beta) ln\left(\frac{1-\beta}{1-\beta_l}\right)\right)\right) \nonumber \\ & \forall \:\:\: \beta < \beta_l \label{eq:lt}
\end{align}
 \else
\begin{align}
 P\left(\frac{S_l}{N} \geq \beta \right) \leq exp\left(-N\left(\beta \:\:\: ln\left(\frac{\beta}{\beta_l}\right)+(1-\beta) ln\left(\frac{1-\beta}{1-\beta_l}\right)\right)\right) \forall \:\:\: \beta > \beta_l \label{eq:rt}\\
 P\left(\frac{S_l}{N} \leq \beta \right) \leq exp\left(-N\left(\beta \:\:\: ln\left(\frac{\beta}{\beta_l}\right)+(1-\beta) ln\left(\frac{1-\beta}{1-\beta_l}\right)\right)\right) \forall \:\:\: \beta < \beta_l \label{eq:lt}
\end{align}
\fi
The equations \eqref{eq:rt} and \eqref{eq:lt} are respectively the right and left tail probability bounds of a binomial distribution. If a binomial experiment is conducted with parameter $p$, large deviations theory is used to compute the probability that conducting the experiment gives us a large deviation from $p$. This is precisely the problem we wish to solve. We would like to know the probability with which, an arm with success probability less than $(\beta-\epsilon)$ can masquerade as an arm with success probability greater than $\beta$ causing undesirable eliminations. This event occurs when the number of successes which is a binomial random variable (i.e., $S_l$) can deviate greatly ($>N\beta$)  from its mean value (which is $<\beta-\epsilon$).  Utilizing \eqref{eq:lt} and \eqref{eq:rt} will lead to significant reduction in arm sample complexity compared to \cite{even2002pac}. We are able to use this bound because here we know exactly the desired probability of reward.
  Under our assumptions, the ACK process is a binomial process. 
We can apply the right-tail bound on \eqref{eq:bo1}, and since we have an upper bound all we need to ensure is to bound the upper bound by $\delta_1$.
 \ifCLASSOPTIONtwocolumn
 \begin{align}
 &P\left(\frac{S_l}{N} \geq \beta \right) \nonumber \\ &\leq exp\left(-N\left(\beta \:\:\: ln\left(\frac{\beta}{\beta_l}\right)+(1-\beta) ln\left(\frac{1-\beta}{1-\beta_l}\right)\right)\right) \nonumber \\ &\leq \delta_1 \label{eq:ub}
\end{align}
 \else
\begin{align}
 P\left(\frac{S_l}{N} \geq \beta \right) \leq exp\left(-N\left(\beta \:\:\: ln\left(\frac{\beta}{\beta_l}\right)+(1-\beta) ln\left(\frac{1-\beta}{1-\beta_l}\right)\right)\right)\leq \delta_1 \label{eq:ub}
\end{align}
\fi
Our value of $N$ is to be such that we do not wrongly eliminate the set of arms whose actual probability of success is higher than $\beta-\epsilon$. Such an elimination will occur when for the arm $l$, $\beta_l<\beta-\epsilon$ is wrongly estimated as greater than $\beta$.
Let $\F$ be the event that $\beta_l \in (0,\beta-\epsilon)$ and $p_{\F}$ be the probability that, $P(\frac{S_l}{N}\geq\beta|\F)$, we would like to bound the $p_{\F}$ by $\delta_1$ for obtaining the PAC solution and we start with an upper bound of $p_{\F}$ itself :
\begin{align}
 sup\{ p_{\F}\} = P(\frac{S_l}{N}\geq\beta|\beta_l=\beta-\epsilon).
\end{align}
This is an upper bound when $S_l$ is binomially distributed and as $\beta_l$ increases the probability of getting atleast $N\beta$ successes increases and from the event space $\F$ we can see that the value of $\beta_l$ is upper bounded by $(\beta-\epsilon)$.
Therefore, if
\begin{align}
P(\frac{S_l}{N}\geq\beta|\beta_l=\beta-\epsilon) < \delta_1 \label{eq:fini1},
\end{align}
then
\begin{align}
p_{\F} \leq \delta_1 .
\end{align}
Thus if we solve \eqref{eq:fini1} for $N$ we will achieve the PAC solution. We also know from \eqref{eq:rt} that,
 \ifCLASSOPTIONtwocolumn
 \begin{align}
 &P(\frac{S_l}{N}\geq\beta|\beta_l=\beta-\epsilon) \leq \nonumber \\ & exp\left(-N\left(\beta \: ln\left(\frac{\beta}{\beta-\epsilon}\right)+(1-\beta) ln\left(\frac{1-\beta}{1-\beta+\epsilon}\right)\right)\right), 
\end{align}
\else
\begin{align}
 P(\frac{S_l}{N}\geq\beta|\beta_l=\beta-\epsilon) \leq exp\left(-N\left(\beta \: ln\left(\frac{\beta}{\beta-\epsilon}\right)+(1-\beta) ln\left(\frac{1-\beta}{1-\beta+\epsilon}\right)\right)\right), 
\end{align}
\fi
and hence,
 \ifCLASSOPTIONtwocolumn
\begin{align}
&e^{\left(-N\left(\beta \: ln\left(\frac{\beta}{\beta-\epsilon}\right)+(1-\beta) ln\left(\frac{1-\beta}{1-\beta+\epsilon}\right)\right)\right)
} \leq \delta_1 \label{eq:fins}\\
 &\implies P(\frac{S_l}{N}\geq\beta|\beta_l=\beta-\epsilon) \leq \delta_1, \\
 &\implies p_{\F} \leq \delta_1.
\end{align}
\else
\begin{align}
&exp\left(-N\left(\beta \: ln\left(\frac{\beta}{\beta-\epsilon}\right)+(1-\beta) ln\left(\frac{1-\beta}{1-\beta+\epsilon}\right)\right)\right) \leq \delta_1 \label{eq:fins}\\
 &\implies P(\frac{S_l}{N}\geq\beta|\beta_l=\beta-\epsilon) \leq \delta_1, \\
 &\implies p_{\F} \leq \delta_1.
\end{align}
\fi
Solving for $N$ in \eqref{eq:fins}, we obtain
\begin{align}
 N &\geq \frac{ln(\frac{1}{\delta_1} )}{\beta ln\left(\frac{\beta}{\beta-\epsilon}\right)+(1-\beta) ln\left(\frac{1-\beta}{1-\beta+\epsilon}\right)} \\
 N &\geq \frac{ln(\frac{log_2(2L+1)}{\delta})}{\beta ln\left(\frac{\beta}{\beta-\epsilon}\right)+(1-\beta) ln\left(\frac{1-\beta}{1-\beta+\epsilon}\right)} \label{eq:frt} .
\end{align}

Similarly if we assign $\e$ to be the event that $\beta_l>\beta+\epsilon$, we would want to limit $p_{\e}=P(\frac{S_l}{N}<\beta|\E)$ to less than $\delta_1$. The value $\p_{\e}$ is upper bounded by
\begin{align}
 sup\{ p_{\e}\} = P(\frac{S_l}{N}\leq\beta|\beta_l=\beta+\epsilon). \label{eq:siml}
\end{align}
Now we are dealing with left tail probabilities and the probability of getting lesser than $N\beta$ successes increase with decreasing $\beta_l$ and since $\beta+\epsilon$ is the lowest $\beta_l \in \e$ \eqref{eq:siml} is true.
Now following similar steps using \eqref{eq:lt} we obtain 
\begin{align}
 N &\geq \frac{ln(\frac{1}{\delta_1}  )}{\beta ln\left(\frac{\beta}{\beta+\epsilon}\right)+(1-\beta) ln\left(\frac{1-\beta}{1-\beta-\epsilon}\right)} \\
 N &\geq \frac{ln(\frac{log_2(2L+1)}{\delta})}{\beta ln\left(\frac{\beta}{\beta+\epsilon}\right)+(1-\beta) ln\left(\frac{1-\beta}{1-\beta-\epsilon}\right)} \label{eq:flt} .
\end{align}
In \cite{even2002pac}, as a consequence of not having a desired target probability, the above bound cannot be used and hence the value of $N \propto \frac{1}{\epsilon^2}$. Here, since we applied a bound which implied knowing the target success probability we get a much lower sample complexity.
Finally from \eqref{eq:frt} and \eqref{eq:flt} we obtain:
\ifCLASSOPTIONtwocolumn
\begin{align}
 N=&max\left(\frac{ln(\frac{log_2(2L+1)}{\delta})}{\beta ln\left(\frac{\beta}{\beta-\epsilon}\right)+(1-\beta) ln\left(\frac{1-\beta}{1-\beta+\epsilon}\right)},\right.\nonumber \\ & \left. \frac{ln(\frac{log_2(2L+1)}{\delta})}{\beta ln\left(\frac{\beta}{\beta+\epsilon}\right)+(1-\beta) ln\left(\frac{1-\beta}{1-\beta-\epsilon}\right)}\right).\label{eq:limn}
\end{align}
\else
\begin{align}
 N=max\left(\frac{ln(\frac{log_2(2L+1)}{\delta})}{\beta ln\left(\frac{\beta}{\beta-\epsilon}\right)+(1-\beta) ln\left(\frac{1-\beta}{1-\beta+\epsilon}\right)},\frac{ln(\frac{log_2(2L+1)}{\delta})}{\beta ln\left(\frac{\beta}{\beta+\epsilon}\right)+(1-\beta) ln\left(\frac{1-\beta}{1-\beta-\epsilon}\right)}\right).\label{eq:limn}
\end{align}
\fi
\end{proof}
We would like to point out some inferences on the sample complexity reduction achieved. For $\beta=0.9$, $\epsilon=0.05$, $\delta=0.05$, and $L=3$ the standard algorithm discussed has a sample complexity of 2000 samples per arm and 14000 samples overall whereas the proposed PBS algorithm has a sample complexity of only 375 per arm and 1125 overall which is a reduction in sample complexity by a little more than a factor of 12. This reduction is possible only because of the specific structure of the problem at hand which we exploit owing to better theoretical bounds.
\subsection{Further Reductions in the Sample Complexity} \label{pacon}
An algorithm to reduce sample complexity was proposed in the previous section exploiting the dependant arm structure and the knowledge of the optimal success probability. However the reader may have noticed  the estimated value $\hat{\beta}_l$ was not exploited to reduce $N$. For instance if the desired value is $\beta=0.9$, and after some $50$ samples the estimated $\hat{\beta}_l$ is $0.5$, the probability of the true $\beta_l$ being close to $0.9$ is very low. We can use confidence intervals bound to exploit this. If after $N_1$ samples a $\hb<\beta$ and neither $\beta-\epsilon$ nor $\beta+\epsilon$ fall in the $1-\delta_1$ confidence interval of $\hb$ then one can say that $\beta_l$ is not within $(\beta-\epsilon,\beta+\epsilon)$ with probability $1-\delta_1$. We propose to use the Wald's confidence interval given in \cite{agresti1998approximate} to obtain:

\begin{align}
 \hb+z\sqrt{\frac{1}{N1}\hb(1-\hb)}<\beta-\epsilon \label{eq:ucb}\\
 \hb-z\sqrt{\frac{1}{N1}\hb(1-\hb)}>\beta+\epsilon \label{eq:lcb},
\end{align}
where $z=Q^{-1}(1-\delta_1)$ and $Q(.)$ is the Q-function corresponding to the tail probability of standard normal distribution. These confidence bounds are accurate for large values of $N_1$ and we heuristically decide to check these confidence intervals from $N\geq20$.
The L.H.S of \eqref{eq:ucb} is the upper confidence bound (UCB) \footnote{There are MAB indexing algorithms that use UCB of the reward estimate to compute the index. However, here we are using the UCB and LCB to reject arms for the $(\epsilon,\delta)$ approach. } of $\beta_l$ i.e., the true value of $\beta_l$ will exceed the UCB by a probability of only $\delta_1$. Similarly the L.H.S of \eqref{eq:lcb} is the lower confidence bound (LCB) i.e., the true value of $\beta_l$ will go lower than the LCB with a probability not more than $\delta_1$. If for a given $l$ the \eqref{eq:ucb} is satisfied then one can reject all the arms $i>l$ and vice-versa for the \eqref{eq:lcb}. Now, if $L=3$, $\beta=0.9$, $\hb=0.5$, $\epsilon=0.05$ and $\delta=0.05$ after $N_1=20$ we can stop sampling that arm and reject all the arms with back-off lesser than that arm. This can greatly reduce the sample complexity and the PBS algorithm can be modified by checking conditions in \eqref{eq:ucb} and \eqref{eq:lcb} after the number of samples crosses 20 and if either of the conditions are satisfied, we reject the arms as mentioned previously else, we will continue as in Algorithm \ref{pacmed}. This approach would help us to recognize a cluster of ACK/NACK inputs and adapt our strategy accordingly if we find an overwhelming number of continuous ACKs or NACKs. 
\subsection{Switching Phase during Exploitation}{\label{swi}}
 Now it may be the case that there is no single offset value that can achieve the exact target success probability $\beta$. However if there is an arm $i$ that achieves success probability between $\beta-\epsilon$ and $\beta$, we can sample another arm $i-1$ so as to achieve $\beta$. Similarly if there is an arm $i$ that achieves success probability between $\beta$ and $\beta+\epsilon$ we can sample another arm $i+1$ to get closer to $\beta$. Therefore, we switch between two arms such that the success probability is as close to the target probability as possible.
\begin{algorithm}[h]
 \caption{Switching Algorithm} \label{switchin}
\begin{algorithmic}[1]
\State The arm $i$ after exploration with estimated probability $\hat{\beta}_i$, $\hat{\beta}=\hat{\beta}_i$.
\State If $\hat{\beta}<\beta$ Set of arms $S=\{i-1,i\}$.
\State If $\hat{\beta}>\beta$ Set of arms $S=\{i,i+1\}$.
\State If $\hat{\beta}<\beta$ sample arm $S(1)$ else sample arm $S(2)$.
\State Estimate $\hat{\beta}$ based on current ACK/NACK using \eqref{eq:alpestin}.
\end{algorithmic}
\end{algorithm}

We use Theorem 1, \ref{pacon} and \ref{swi} to come up with the final Algorithm \ref{fin}.

\subsection{Final Algorithm}
\ifCLASSOPTIONtwocolumn
\begin{algorithm}[h]
 \caption{Final Algorithm} \label{fin}
\begin{algorithmic}[1]
\State The set of $2L+1$ arms indexed from $-L$ to $L$, $l_{low}=-L$ ,$l_{high}=L$ 
\State If$l_{high}>l_{low}$:$l_{index}=median(l_{low}:l_{high})$ and Sample $l_{index}$. Else go to Step 7.
\State If after $N$ samples where $N>20$ and $\hb+z\sqrt{\frac{1}{N}\hb(1-\hb)}<\beta-\epsilon$, $l_{high}=l_{index}-1$ and goto Step 2.
\State If after $N$ samples where $N<20$ and and $\hb-z\sqrt{\frac{1}{N}\hb(1-\hb)}>\beta+\epsilon$, $l_{low}=l_{index}+1$ and goto Step 2.

\State If after $N=$ 
 \Statex $max\left(\frac{ln(\frac{log_2(2L+1)}{\delta})}{\beta ln\left(\frac{\beta}{\beta-\epsilon}\right)+(1-\beta) ln\left(\frac{1-\beta}{1-\beta+\epsilon}\right)}\right.,$ \Statex $\left.\frac{ln(\frac{log_2(2L+1)}{\delta})}{\beta ln\left(\frac{\beta}{\beta+\epsilon}\right)+(1-\beta) ln\left(\frac{1-\beta}{1-\beta-\epsilon}\right)}\right)$, $\hb<\beta-\epsilon$ perform $l_{high}=l_{index}-1$ and goto Step 2.
\State If after $N$ samples where $N=max\left(\frac{ln(\frac{log_2(2L+1)}{\delta})}{\beta ln\left(\frac{\beta}{\beta-\epsilon}\right)+(1-\beta) ln\left(\frac{1-\beta}{1-\beta+\epsilon}\right)}\right.$,\Statex $\left.\frac{ln(\frac{log_2(2L+1)}{\delta})}{\beta ln\left(\frac{\beta}{\beta+\epsilon}\right)+(1-\beta) ln\left(\frac{1-\beta}{1-\beta-\epsilon}\right)}\right)$, $\hb>\beta+\epsilon$ $l_{low}=l_{index}+1$ and goto Step 2.
\State If after $N=max\left(\frac{ln(\frac{log_2(2L+1)}{\delta})}{\beta ln\left(\frac{\beta}{\beta-\epsilon}\right)+(1-\beta) ln\left(\frac{1-\beta}{1-\beta+\epsilon}\right)},\right.$\Statex$\left.\frac{ln(\frac{log_2(2L+1)}{\delta})}{\beta ln\left(\frac{\beta}{\beta+\epsilon}\right)+(1-\beta) ln\left(\frac{1-\beta}{1-\beta-\epsilon}\right)}\right)$ and $\beta-\epsilon<\hb<\beta$ initialize ${ l_{switching}}= \{l_{index}$, $l_{index}-1\}$ and goto Step 9.

\State If after $N=max\left(\frac{ln(\frac{log_2(2L+1)}{\delta})}{\beta ln\left(\frac{\beta}{\beta-\epsilon}\right)+(1-\beta) ln\left(\frac{1-\beta}{1-\beta+\epsilon}\right)},\right.$\Statex$\left.\frac{ln(\frac{log_2(2L+1)}{\delta})}{\beta ln\left(\frac{\beta}{\beta+\epsilon}\right)+(1-\beta) ln\left(\frac{1-\beta}{1-\beta-\epsilon}\right)}\right)$ and $\beta<\hb<\beta+\epsilon$ initialize $l_{switching}= \{l_{index}$, $l_{index}+1\}$ and goto Step 9.

\State If $\hb>\beta$ use $l_{index}=max({ l_{switching}})$ else use $l_{index}=min({ l_{switching}})$ and repeat step 9 till end. 
\end{algorithmic}
\end{algorithm}
\else
\begin{algorithm}[h]
 \caption{Final Algorithm} \label{fin}
\begin{algorithmic}[1]
\State The set of $2L+1$ arms indexed from $-L$ to $L$, $l_{low}=-L$ ,$l_{high}=L$ 
\State If$l_{high}>l_{low}$:$l_{index}=median(l_{low}:l_{high})$ and Sample $l_{index}$. Else go to Step 7.
\State If after $N$ samples where $N>20$ and $\hb+z\sqrt{\frac{1}{N}\hb(1-\hb)}<\beta-\epsilon$, $l_{high}=l_{index}-1$ and goto Step 2.
\State If after $N$ samples where $N<20$ and and $\hb-z\sqrt{\frac{1}{N}\hb(1-\hb)}>\beta+\epsilon$, $l_{low}=l_{index}+1$ and goto Step 2.

\State If after $N=max\left(\frac{ln(\frac{log_2(2L+1)}{\delta})}{\beta ln\left(\frac{\beta}{\beta-\epsilon}\right)+(1-\beta) ln\left(\frac{1-\beta}{1-\beta+\epsilon}\right)},\frac{ln(\frac{log_2(2L+1)}{\delta})}{\beta ln\left(\frac{\beta}{\beta+\epsilon}\right)+(1-\beta) ln\left(\frac{1-\beta}{1-\beta-\epsilon}\right)}\right)$, $\hb<\beta-\epsilon$ perform $l_{high}=l_{index}-1$ and goto Step 2.
\State If after $N$ samples where $N=max\left(\frac{ln(\frac{log_2(2L+1)}{\delta})}{\beta ln\left(\frac{\beta}{\beta-\epsilon}\right)+(1-\beta) ln\left(\frac{1-\beta}{1-\beta+\epsilon}\right)},\frac{ln(\frac{log_2(2L+1)}{\delta})}{\beta ln\left(\frac{\beta}{\beta+\epsilon}\right)+(1-\beta) ln\left(\frac{1-\beta}{1-\beta-\epsilon}\right)}\right)$, $\hb>\beta+\epsilon$ $l_{low}=l_{index}+1$ and goto Step 2.
\State If after $N=max\left(\frac{ln(\frac{log_2(2L+1)}{\delta})}{\beta ln\left(\frac{\beta}{\beta-\epsilon}\right)+(1-\beta) ln\left(\frac{1-\beta}{1-\beta+\epsilon}\right)},\frac{ln(\frac{log_2(2L+1)}{\delta})}{\beta ln\left(\frac{\beta}{\beta+\epsilon}\right)+(1-\beta) ln\left(\frac{1-\beta}{1-\beta-\epsilon}\right)}\right)$ and $\beta-\epsilon<\hb<\beta$ initialize ${ l_{switching}}= \{l_{index}$, $l_{index}-1\}$ and goto Step 9.

\State If after $N=max\left(\frac{ln(\frac{log_2(2L+1)}{\delta})}{\beta ln\left(\frac{\beta}{\beta-\epsilon}\right)+(1-\beta) ln\left(\frac{1-\beta}{1-\beta+\epsilon}\right)},\frac{ln(\frac{log_2(2L+1)}{\delta})}{\beta ln\left(\frac{\beta}{\beta+\epsilon}\right)+(1-\beta) ln\left(\frac{1-\beta}{1-\beta-\epsilon}\right)}\right)$ and $\beta<\hb<\beta+\epsilon$ initialize $l_{switching}= \{l_{index}$, $l_{index}+1\}$ and goto Step 9.

\State If $\hb>\beta$ use $l_{index}=max({ l_{switching}})$ else use $l_{index}=min({ l_{switching}})$ and repeat step 9 till end. 
\end{algorithmic}
\end{algorithm}
\fi
 In Algorithm \ref{fin} we first pick $l_{index}$ as the arm to be sampled at Step 2. Then after sampling it a sufficient number of times say 20, we estimate the upper and lower confidence bounds of probability estimate $\hat{\beta_l}$ with probability $(1-\delta_1)$. If the confidence interval lies completely outside the interval $(\beta-\epsilon,\beta+\epsilon)$ then we decide reject all arms $<l$ or $>l$ if the $LCB>(\beta+\epsilon)$ or $UCB<(\beta-\epsilon)$ as the case may be. These set of operations are performed in steps 3 and 4. Thus when the estimated $\hat{\beta}_l$ for a given arm $l$ is much less than $\beta-\epsilon$ or much greater than $\beta+\epsilon$ after 20 samples based on the confidence bounds, we perform the elimination of all the arms which are even further from the desired interval $(\beta-\epsilon,\beta+\epsilon)$ conditioned on $\hat{\beta}_l$, without waiting for the entire sample complexity $N$ as in \eqref{eq:limn}. Otherwise, we obtain $N$ samples and the algorithm performs Steps 5 and 6 similar to Algorithm \ref{pacmed}. In Section \ref{pacst}, we saw the arm sample complexity to be 1125 for a typical case. Using the confidence bounds we can reduce the arm sample complexity even further.  Finally after we reduce the set of arms to a single final arm, we enter the switching phase in Steps 7,8 and 9 which has been described completely in Section \ref{swi}.  
 \section{Other Approaches for solving the MAB}
 \subsection{Thompson Sampling Approach}
 The Thompson sampling based solution from \cite{chapelle2011empirical} is given in Algorithm \ref{TS}.
 \begin{algorithm}[h]
 \caption{Thompson Sampling Algorithm} \label{TS}
\begin{algorithmic}[1]
\State For the set of $2L+1$ arms from $l \in {-L,L}$, $S_l=F_l=0$
\State At each time instant $t$, for each $l$ generate $\theta_l(t)$ from the pdf $f(x,S_l,F_l)=\frac{\Gamma(S_l+F_l)}{\Gamma(S_l)\Gamma(F_l)}x^{S_l-1}(1-x)^{F_l-1}$
\State Use the offset $l$ whose $\theta_l(t)$ is the highest.
\State If ACK increment $S_l$ else increment $F_l$.
\State Repeat from Step 2 with new $S_l$s and $F_l$s.

\end{algorithmic}
\end{algorithm}
In the algorithm given, initially, all the $\theta_l$s are from the same uniform distribution then as the successes increase for a given $l$ the $f(x,S_l,F_l)$ will pick higher and higher values. However for our problem this only means than the most conservative arm will be picked.
\subsection{UCB Approach}
 The UCB based algorithm from \cite{auer2002finite} is given in Algorithm \ref{ucbg}.
\begin{algorithm}[h]
 \caption{UCB Algorithm} \label{ucbg}
\begin{algorithmic}[1]
\State For the set of $2L+1$ arms play each arm once
\State Let $x_l=\frac{S_l}{N_l}$ be the success probability of each arm where $S_l$ is the number of successes and $N_l$ is the number of failures.
\State Use the the offset $l$ for which $x_l+\sqrt{\frac{2 log(n)}{N_l}}$ is maximum where $n$ is the total number of trials across all offset
\State If ACK increment $S_l,N_l$ else increment $N_l$.

\end{algorithmic}
\end{algorithm}
This algorithm will also converge to the offset that maximizes the success probability. Both these algorithms as we can see converge to a very conservative offset. There could be techniques where instead of picking the ``best'' arm as estimated by these algorithms we could pick up a second or third best arm in terms of success probabilities, however the mapping between the arm ranking and proximity to the target BLER is neither obvious nor trivial.  While the vanilla UCB and Thompson based approaches fail to achieve our targeted objective, there is scope for future work in other variants based on UCB/Thompson sampling which could get us to a desired success probability rather than simply maximizing reward. However, in the current contribution the above reasons provide justification as to why our PAC based technique is ideally suited to this problem. 

\section{Simulations and Results}
An LTE cell with 7 cell - 3 sectors is considered, for simulation with ``wrap around'' implemented, in order to avoid edge discontinuities. The UEs are distributed uniformly in each sector with 15 UEs per sector.  LTE systems use OFDMA in the physical layer where sub-carriers are grouped into sub-bands \cite{lte36211}, and users are allocated a set of sub-bands for data transmission. Each eNodeB transmits over the same set of resources, as, it is a reuse-one system. The OFDMA for the 10MHz LTE system has 1024 sub-carriers where only the 600 in the middle are used \cite{lte36213}. These 600 sub-carriers  are grouped into 50 groups of 12 sub-carriers (SCs) each and this is done over 14 OFDM symbols. So this group of 12 SCs over 14 symbols is called one Physical 
Resource Block(PRB) and the 14 OFDM symbols together constitute a sub-frame \cite{lte36211}. There are 50 PRBs in a sub-frame and a continuous block of 3PRBs are grouped to form a sub-band. There are 17 sub-bands \footnote{16 of these sub-bands consists of 3 frequency domain contiguous PRBs, while the 17th sub-band consists of only 2 contiguous PRBs.} in LTE for the 10MHz system \cite{lte36211}, and, scheduling and transmission is done at the sub-band level. The set of sub-bands allocated to a user, is called a transport block and every user will be allocated one rate for the whole transport block. For more details on the LTE system, please refer to \cite{pull2015}.

The path loss exponent and shadow fading parameters are as specified in \cite{scm} for an Urban Macro model. The channel model used in the simulator is the generic channel model for wideband channels as specified in \cite{scm} which is a realistic Spatial Channel Model  for wideband cellular systems. The channels between each UE and the 9 strongest eNodeBs are modelled using different parameters such as the angle of arrival and departure of the multipath rays, distance-dependent power delay profile, and multipath profiles \cite{scm}. Thus we are modelling 8 strongest interferers apart from the desired eNodeB. We find that for every UE the interferers which are weaker than the 8-th strongest interferer are below the level of thermal noise and hence we do not model them explicitly to save simulation complexity. It can be seen that, different users experience different delay spreads, and even the same user is subjected to different delay spreads from different eNodeBs. Hence, the multipath power delay profile of the channel between the UE and the serving eNodeB can differ from that between the served UE and the interfering eNodeBs. 

A round robin scheduler is used and the UEs feedback wideband CQI as in \cite{lte36213} Periodic Mode 1-0. The detailed simulation set-up is provided in Table \ref{table:bsm}. Typically, LTE system requires a BLER of $10^{-1}$ or lesser. Hence we study the system performance with target BLERs in that range: a) $0.075$, b) $0.1$, c) $0.125$, d) $0.15$.
\ifCLASSOPTIONtwocolumn
\begin{table}
\caption{Baseline Simulation Parameters}
  \centering
    \begin{tabular}{|l|r|}
        \hline
        Duplex method & FDD \\ \hline
        Test Environment & Base coverage urban \\ \hline
        Deployment scenario & Urban macro-cell scenario \\ \hline
        Base station antenna height & 25 m, above rooftop \\ \hline
        Minimum distance between&\\ UT and serving cell & $ >= 25m $ \\ \hline
       Outdoor to in-car penetration loss & 9 dB(LN,$ \sigma = 5dB)$ \\ \hline
        Layout & seven cell Hexagonal \\&grid with wrap around. \\ \hline
        carrier frequency & 2 GHz \\ \hline
        Inter-site distance & 500 m \\ \hline
        UT speeds of interest & 30 km/h \\ \hline
        Total eNodeB transmit power & 46 dBm for 10 MHz \\ \hline
        Thermal noise level & -174 dBm/Hz \\ \hline
        User mobility model & Fixed and identical \\& speed $ |v| $ of all UTs,\\
& randomly and uniformly\\& distributed direction \\ \hline
        Inter-site interference modelling & 8 interferers Explicitly modelled \\ \hline
       eNodeB noise figure & 5 dB \\ \hline
        UT noise figure & 7 dB \\ \hline
        eNodeB antenna gain & 17 dBi \\ \hline
        UT antenna gain & 0 dBi \\ \hline
        Antenna configuration & Vertically polarized antenna \\& 0.5 wavelength separation \\
& at UE  10 wavelength \\&separation at base-station \\    \hline
        Channel Model & Urban Macro model (UMa) in \cite{scm} \\ \hline
           Network synchronization & Synchronized   \\ \hline
        Downlink transmission scheme & 1x2 Single Input Multiple Output       \\ \hline
        Downlink Scheduler & Round Robin with full \\& bandwidth allocation  \\ \hline
        Downlink Adaptation &  Wideband CQI  \\
& for all users,at 5 ms CQI \\& feedback periodicity,\\& CQI delay :Ideal, \\
\hline
     
        Simulation bandwidth & 10 + 10 MHz (FDD)   \\ \hline
        \hline
        
    \end{tabular}
    \label{table:bsm}
\end{table}
\else
\begin{table}
\caption{Baseline Simulation Parameters}
  \centering
    \begin{tabular}{|l|r|}
        \hline
        Duplex method & FDD \\ \hline
        Test Environment & Base coverage urban \\ \hline
        Deployment scenario & Urban macro-cell scenario \\ \hline
        Base station antenna height & 25 m, above rooftop \\ \hline
        Minimum distance between UT and serving cell & $ >= 25m $ \\ \hline
       Outdoor to in-car penetration loss & 9 dB(LN,$ \sigma = 5dB)$ \\ \hline
        Layout & seven cell Hexagonal grid with wrap around. \\ \hline
        carrier frequency & 2 GHz \\ \hline
        Inter-site distance & 500 m \\ \hline
        UT speeds of interest & 30 km/h \\ \hline
        Total eNodeB transmit power & 46 dBm for 10 MHz \\ \hline
        Thermal noise level & -174 dBm/Hz \\ \hline
        User mobility model & Fixed and identical speed $ |v| $ of all UTs,\\
& randomly and uniformly distributed direction \\ \hline
        Inter-site interference modelling & 8 interferers Explicitly modelled \\ \hline
       eNodeB noise figure & 5 dB \\ \hline
        UT noise figure & 7 dB \\ \hline
        eNodeB antenna gain & 17 dBi \\ \hline
        UT antenna gain & 0 dBi \\ \hline
        Antenna configuration & Vertically polarized antenna 0.5 wavelength separation \\
& at UE  10 wavelength separation at base-station \\    \hline
        Channel Model & Urban Macro model (UMa) in \cite{scm} \\ \hline
           Network synchronization & Synchronized   \\ \hline
        Downlink transmission scheme & 1x2 Single Input Multiple Output       \\ \hline
        Downlink Scheduler & Round Robin with full bandwidth allocation  \\ \hline
        Downlink Adaptation &  Wideband CQI  \\
& for all users,at 5 ms CQI feedback periodicity, CQI delay :Ideal, \\
\hline
     
        Simulation bandwidth & 10 + 10 MHz (FDD)   \\ \hline
        \hline
        
    \end{tabular}
    \label{table:bsm}
\end{table}
\fi
The final Algorithm \ref{fin} we apply is entirely based on the principles of both Sections \ref{pacst} and \ref{pacon}. The Steps 5 and 6 initiate a switching operation. This implies that when we find an arm that is $\epsilon$ optimal we retain that arm and its neighbour. If the $\epsilon$ optimal arm $m$ has success probability less than $\beta$ we retain the arm $(m-1)$ and play that arm till the success probability touches $\beta$ and vice-versa.

Given the simulation parameters we compare 4 schemes over a simulation duration of 5 seconds:
\begin{itemize}
\item The proposed OLLA algorithm based on MAB with Target BLER as $10\%$ i.e., $\alpha=0.1,\beta=0.9$.
\item The proposed OLLA algorithm based on MAB with Target BLER as $7.5\%$ i.e., $\alpha=0.075,\beta=0.925$.
\item An OLLA technique where when 5 consecutive NACKs are seen the value $l$ in \eqref{eq:off} is decremented by $1$ and it is incremented by $1$ when 50 consecutive ACKs are seen. This is based on \cite{ariyanthe} and is reffered to as the ``Clustering'' scheme because we look at error clusters.
\item A simulation with No OLLA.
\end{itemize}

Looking at \figref{fig:bl} where we compare the BLER attained by the various schemes. It can be seen that the Clustering technique has a substantially lower BLER than MAB and no OLLA schemes. The different MAB schemes achieve their respective target BLERs with high accuracy. Around $80\%$ the users also achieve a BLER less than or equal to the desired target BLER. Thus it is seen that the MAB schemes provide us a high level of control over the BLER levels.

 \ifCLASSOPTIONtwocolumn
\begin{figure}
\begin{center}
 \includegraphics[width=0.8\columnwidth]{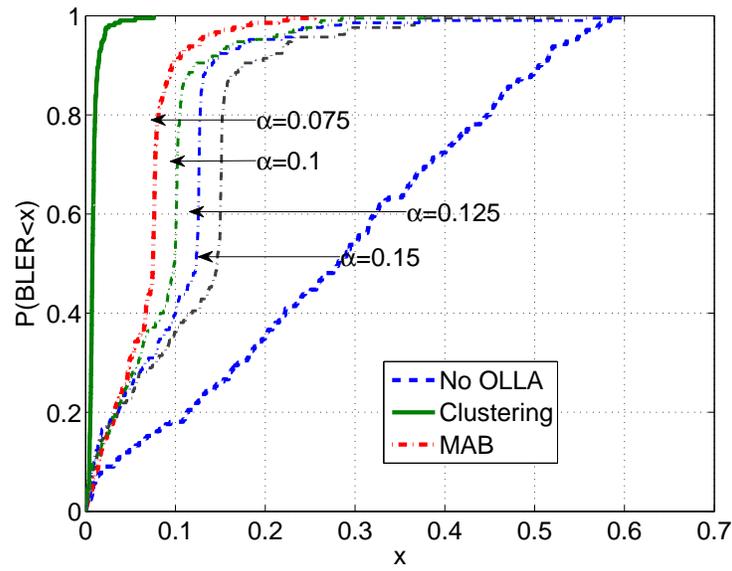}
\caption{BLER CDF}
\label{fig:bl}
\end{center}
\end{figure}
\else
 \begin{figure}
\begin{center}
 \includegraphics[width=0.6\columnwidth]{blermab.eps}
\caption{BLER CDF}
\label{fig:bl}
\end{center}
\end{figure}
\fi

Now let us look at \figref{fig:th} where we compare the throughput obtained by the clustering scheme and the best MAB based scheme. It can be seen that 30\% of MAB users have a throughput higher than 2Mbps while only 18\% of the users under clustering show a similar performance. In fact, the throughput performance of the clustering scheme is poorer than performing no OLLA at all. Therein lies the fault with an arbitrary OLLA scheme. Our proposed scheme on the other hand is able to control BLER while simultaneously achieving a higher throughput than a no OLLA scheme. This is because the clustering tends to achieve arbitrarily low BLER at the cost of reducing the transmission rates while MAB does not try to achieve lower than necessary BLER levels. To understand this we would like to present the average offset value when clustering and MAB are used. The average offset value for clustering was less than $-4$, while it was just below $-2$ for MAB. This is because, the clustering technique reacted very fast to incoming NACKs whereas in the MAB algorithm considered we explore each offset for a while before coming to a conclusion.
 \ifCLASSOPTIONtwocolumn
\begin{figure}
\begin{center}
 \includegraphics[width=0.8\columnwidth]{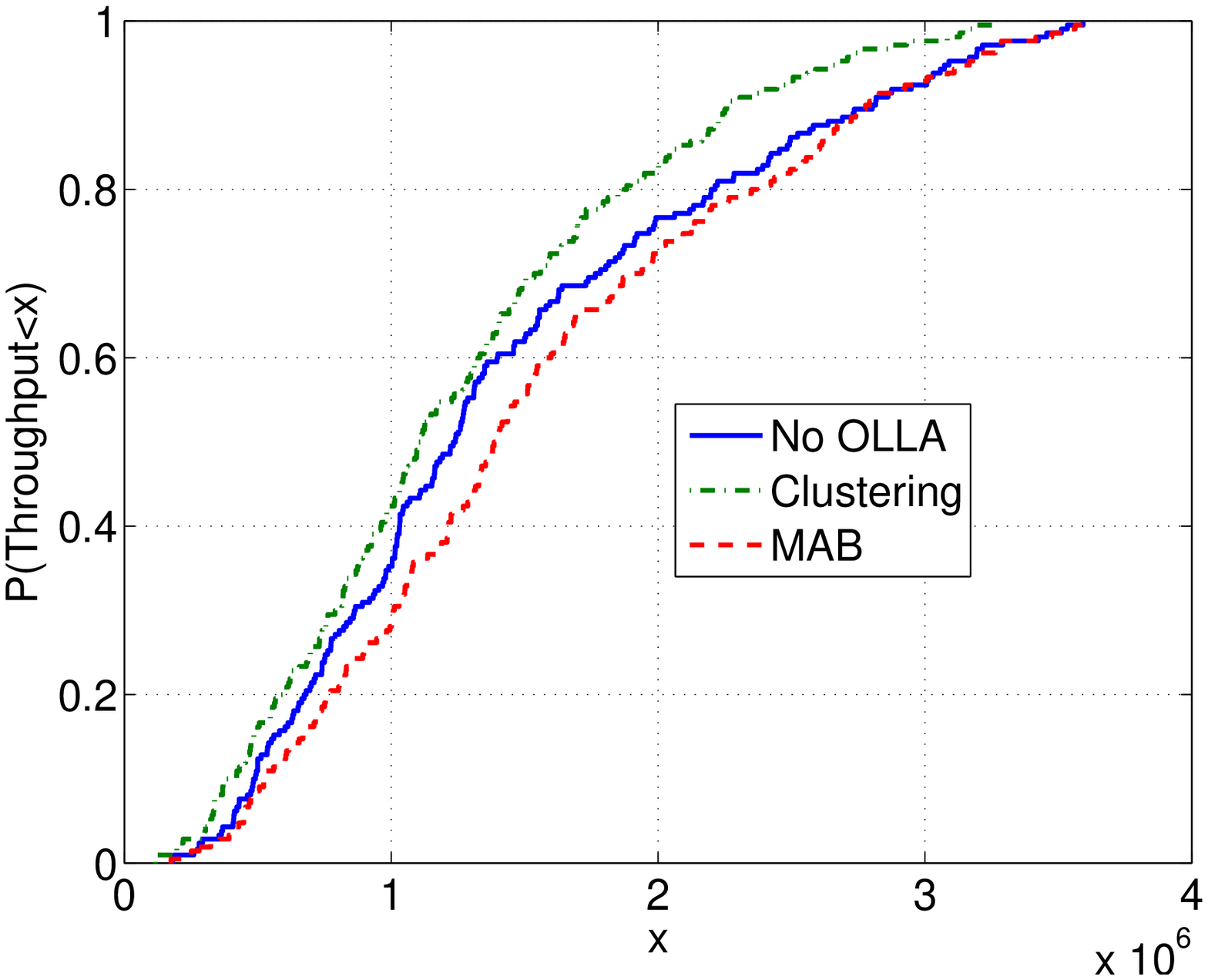}
\caption{Throughput CDF}
\label{fig:th}
\end{center}
\end{figure}
\else
 \begin{figure}
\begin{center}
 \includegraphics[width=0.6\columnwidth]{thromab.eps}
\caption{Throughput CDF}
\label{fig:th}
\end{center}
\end{figure}
\fi

  
\begin{table}
\caption{A comparison table of the schemes with various parameters averaged}
  \centering
    \begin{tabular}{|c||c|c|c|c|}
        \hline
      Parameter  &MAB&MAB&Clustering& No OLLA  \\for Comparison & 7.5\% &10\% &  & \\ \hline \hline
      
      Average &&&&\\ Throughput &1.55 Mbps&1.54 Mbps &1.25Mbps&1.43Mbps \\ \hline
      Average BLER &6.66\%&8.51\%&1\%&27.7\% \\ \hline
      Average  &2.39&2.41&2.15&2.24 \\ 
      Symbol Rate & & & & \\ \hline
    \end{tabular}
\label{table:com}
\end{table}
The average throughput, BLER and symbol rate shown in the Table \ref{table:com} also prove the superiority of the MAB based scheme over the competing schemes.

It can be seen from the results that MAB is able to achieve a tight control of the BLER and this property can be exploited by setting different target BLERs for different users in the system depending upon the traffic type and QoS demanded by each user. Moreover, if the condition to be met is a system-wide target BLER, certain users can be given a higher than target BLER and others a lower than target BLER such that overall the target BLER is achieved. By using ideas from reinforcement learning which are appropriate for the problem at hand, we are able to guarantee a target BLER for each UE while achieving a high throughput. On the other hand the adhoc OLLA schemes such as clustering cannot guarantee a specific BLER because there is no mathematical method behind deriving a cluster length and all results are highly system specific.  
\section{Conclusions and Future Work}
We have been able to map the OLLA problem in cellular networks to an MAB problem and successfully solve it. Hence we use a mathematical framework of reinforcement learning and its tools for obtaining OLLA algorithms which are actually able to maintain a target BLER with nearly 90\% of the users achieving a BLER less than the target BLER, while simultaneously achieving a high throughput. Bounds from large deviation theory is used to significantly reduce the arm sample complexity. Further reduction is then achieved using confidence bounds. We believe that our result is general and can be used to provide reduced sample complexity for any MAB problem whose target reward is known.  For the majority of the users we outperform the ad-hoc scheme and by strictly adhering to the target BLER the proposed scheme can help us achieve a desired latency and QoS. In the current work, we solve an independent MAB problem for each UE, but keep the same target BLER for each UE. A possible extension is to study the case where the target BLER is made to be a function of its traffic QoS and the average $\sinr$ and study the techniques presented here.  Furthermore, by providing a mathematically rigorous approach we have made the OLLA problem open to further improvements and the MAB based technique can be further exploited.
 \bibliography{learning_REF.bib}
\bibliographystyle{IEEEtran}
\end{document}